\documentclass[letterpaper,journal]{IEEEtran}

\usepackage[T1]{fontenc}
\usepackage{amsmath,amsfonts,amssymb}
\usepackage{mathtools}
\usepackage{amsthm}
\usepackage{bm}
\usepackage{algorithm}
\usepackage{algorithmic}
\usepackage{array}
\usepackage{booktabs}
\usepackage{multirow}
\usepackage{makecell}
\usepackage[table]{xcolor}
\usepackage{graphicx}
\usepackage{microtype}
\usepackage{enumitem}
\usepackage[caption=false,font=footnotesize,labelfont=sf,textfont=sf]{subfig}
\usepackage{textcomp}
\usepackage{stfloats}
\usepackage{url}
\usepackage{xspace}
\usepackage{placeins}
\usepackage{cite}
\usepackage{tikz}
\usetikzlibrary{arrows.meta,positioning,fit,calc,shapes.geometric}
\usepackage[colorlinks=true,linkcolor=black,citecolor=black,urlcolor=black]{hyperref}
\usepackage[capitalize,noabbrev]{cleveref}

\hypersetup{
  pdftitle={GeoBalance: Geometry-Aware Monitoring and Reconstruction with Asymmetric Optimization for Balanced Multimodal Learning},
  pdfauthor={Zechang Xiong, Da Li, Rong Yin, Kexin Tang, Biao Yang, Pengyuan Li, Wenkang Kong, Yulan Hu, Shengyu Zhu, Hao Peng},
  pdfkeywords={Modality Imbalance, Multimodal Learning, Neural Collapse, Representation Geometry}
}

\definecolor{gbblue}{HTML}{315A7D}
\definecolor{gbteal}{HTML}{4F8B88}
\definecolor{gborange}{HTML}{D57A4A}
\definecolor{gblight}{HTML}{EEF3F6}
\definecolor{gbwarm}{HTML}{FBF1EA}
\definecolor{gbgray}{HTML}{F3F4F5}
\definecolor{gbink}{HTML}{263238}

\newcommand{\best}[1]{\textbf{#1}}
\newcommand{\second}[1]{\underline{#1}}

\theoremstyle{plain}
\newtheorem{proposition}{Proposition}
\theoremstyle{definition}
\newtheorem{definition}[proposition]{Definition}
\theoremstyle{remark}

\begin{document}

\title{GeoBalance: Geometry-Aware Monitoring and Reconstruction with Asymmetric Optimization \\for Balanced Multimodal Learning}

\author{Zechang~Xiong, Da~Li, Rong~Yin, Kexin~Tang, Biao~Yang, Pengyuan~Li, Wenkang~Kong, Yulan~Hu, Shengyu~Zhu, and Hao~Peng%
\thanks{Zechang Xiong, Rong Yin, and Hao Peng are with Beihang University,  Beijing, China. 
Da Li and Shengyu Zhu are with the Institute of Computing Technology, Chinese Academy of Sciences, Beijing, China. 
Kexin Tang is with Amap, Alibaba Group, Beijing, China.
Biao Yang is with Kuaishou Technology, Beijing, China. 
Pengyuan Li and Wenkang Kong are with Beijing Jiaotong University, Beijing, China.
Yulan Hu is with Renmin University of China, Beijing, China.

\protect\par Zechang Xiong and Da Li contributed equally to this work.

\protect\par Corresponding author: Rong Yin (yinrong@buaa.edu.cn).}
\thanks{This work has been submitted to the IEEE for possible publication. Copyright may be transferred without notice, after which this version may no longer be accessible.}}

\markboth{Preprint}%
{Xiong \MakeLowercase{\textit{et al.}}: GeoBalance}

\maketitle

\begin{abstract}
Multimodal classifiers can converge to modality-dominant solutions in which one modality dominates the joint prediction, suppressing the learning of others.
Existing balancing methods mainly adjust losses, gradients, or modality contributions, largely treating modality imbalance as an optimization problem while implicitly treating the weak modality as under-optimized but representationally intact.
In this work, we find that this assumption does not always hold, as persistent modality dominance can induce a representation-level collapse of the weak modality, which we term \emph{manifold modality collapse} (MMC).
MMC manifests as a coupled geometric degradation in which weak-modality representations collapse onto fewer directions within each class and become less separable across classes.
Motivated by this observation, we propose \emph{GeoBalance}, a geometry-aware framework that monitors these two geometric properties and reconstructs the weak modality representation only when it exhibits signs of MMC.
Once triggered, GeoBalance uses a fixed Simplex-ETF class scaffold and spectral regularization to restore class separation while preventing collapse onto a few feature directions.
To preserve reconstruction during joint training, asymmetric gradient projection removes the joint-gradient component conflicting with reconstruction, leaving non-conflicting optimization unchanged. 
Extensive experiments across six multimodal benchmarks demonstrate great improvements over competitive balancing methods, validating its effectiveness.
\end{abstract}

\begin{IEEEkeywords}
Modality Imbalance, Multimodal Learning, Neural Collapse, Representation Geometry.
\end{IEEEkeywords}

\section{Introduction}
\IEEEPARstart{M}{ultimodal} learning aims to exploit complementary evidence from multiple modalities so that a model can resolve ambiguities that cannot be reliably addressed by any individual modality~\cite{xue2023multilevel,zhang2024lowqualitysurvey,jin2026taskaware}. However, complementarity does not arise from fusion alone. It also requires each modality branch to preserve information that remains useful to the joint prediction~\cite{wang2025mmcn}. In practice, modalities often differ substantially in signal quality, task relevance, and optimization difficulty. A modality with cleaner or more task-aligned features can reduce the joint loss more rapidly, steer the fused predictor, and progressively suppress the learning of weaker modalities~\cite{wang2020what,peng2022balanced,huang2022modality,liu2024uekd}. This phenomenon, commonly referred to as \emph{modality dominance}, may remain obscured by competitive multimodal accuracy: the fused classifier can perform well on the standard test distribution while relying disproportionately on a single modality. Such modality-dominant learning obtains limited benefit from multimodal complementarity and can become brittle when the dominant modality is corrupted, missing, or inconsistent with the remaining inputs~\cite{lee2023multimodal,ma2025dynamic}. For incomplete multimodal inputs, separating shared and modality-specific features also supports missing-modality recovery~\cite{lin2025ssmc}.

\begin{figure}
  \centering
  \includegraphics[width=0.9\linewidth]{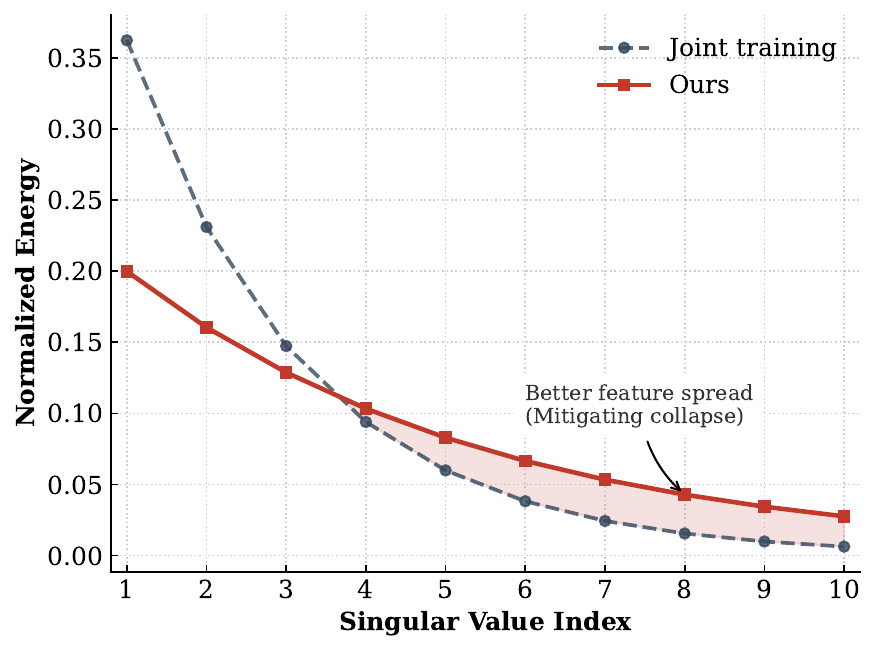}
  \caption{Normalized singular-value energy of weak-modality representations after training. Joint training concentrates most of the feature energy in a few leading components, whereas GeoBalance produces a broader spectrum with more distributed energy. This spectral concentration reflects reduced effective dimensionality; MMC is diagnosed only when it is accompanied by degraded class separability.}
  \label{fig:intro}
\end{figure}

Most existing balancing methods formulate modality dominance primarily as an optimization-allocation problem. They adjust gradient magnitudes~\cite{peng2022balanced,li2023agm}, reshape losses or classifier interactions~\cite{xu2023mmcosine,fan2023pmr}, alternate modality-specific learning phases~\cite{zhang2024mla}, or coordinate modality objectives explicitly~\cite{wei2024mmpareto}. Although these approaches differ in implementation, they largely share an implicit premise: the weak modality is insufficiently optimized, but its representation remains structurally usable. Under this premise, assigning the weak modality a stronger, better-timed, or more independent optimization signal should allow it to catch up with the dominant modality. This view is appropriate when the weak modality is merely learning more slowly. It becomes incomplete, however, when persistent dominance changes not only the amount of learning received by the weak modality, but also the geometry of the representation being learned. Once the representation itself has deteriorated, increasing its loss or gradient does not specify which discriminative structure should be restored and may continue to optimize a restricted set of cues on an already degenerated feature manifold.

In this work, we identify such a representation-level failure mode. As illustrated in \cref{fig:intro}, persistent modality dominance can concentrate the energy of weak-modality representations along a small number of feature directions. Such spectral concentration is not inherently harmful, because task-relevant information may naturally lie on a compact manifold. The failure becomes consequential when this loss of effective directions is accompanied by reduced class separability: within-class representations become confined to a narrow subspace, while different classes become less distinguishable relative to their internal dispersion. We term this coupled degradation \emph{manifold modality collapse} (MMC). MMC therefore describes a loss of usable class-conditional geometry rather than merely a low-rank representation.

MMC reveals that modality imbalance is not always a static difference in optimization progress. A weak modality may remain geometrically healthy during early training and still contribute useful complementary evidence, yet gradually lose this capacity as the joint predictor becomes increasingly dependent on the dominant modality. Once this transition occurs, simply assigning the weak modality a larger loss or gradient does not specify how its missing discriminative structure should be recovered. Conversely, imposing a fixed geometric constraint throughout training may interfere with useful cross-modal co-adaptation before any collapse has emerged. The goal of balancing should therefore not be to force modalities to remain equally strong at every stage of training, but to distinguish a weak yet usable representation from one whose geometry has become unable to support a meaningful contribution to the joint prediction.

Guided by this view, we propose \emph{GeoBalance}, a geometry-aware framework organized as a monitor-reconstruct-project process. GeoBalance first monitors within-class non-degeneracy and class separability relative to a healthy training reference, and activates reconstruction only when both properties deteriorate. Once activated, a fixed Simplex equiangular tight frame (ETF) provides a class-level scaffold for restoring class separability, while spectral regularization prevents the weak-modality representation from concentrating along only a few feature directions. Because the fused objective may continue to favor the dominant modality during this reconstruction, GeoBalance further compares the joint gradient with the reconstruction gradient on their shared parameters. When the two gradients conflict, an asymmetric projection removes only the conflicting component of the joint gradient, while retaining its aligned component and leaving the reconstruction gradient unchanged. The resulting intervention is selective in both training time and gradient space, and all auxiliary modules are removed after training. Across six multimodal benchmarks, GeoBalance consistently improves over competitive balancing methods under end-to-end trainable settings.

Our main contributions are summarized as follows:
\begin{itemize}
     \item We identify \emph{manifold modality collapse}, a representation-level failure characterized by the coupled degradation of within-class effective dimensionality and class separability, revealing that modality imbalance can involve structural deterioration rather than merely insufficient optimization.

    \item We develop \emph{GeoBalance} to formulate modality balancing as a state-conditioned monitor-reconstruct-project process that distinguishes weak but usable representations from actual collapse, intervenes only when necessary, and protects reconstruction when it conflicts with joint optimization.

    \item We conduct extensive experiments across six multimodal benchmarks, demonstrating that GeoBalance achieves superior average predictive performance while restoring healthier weak-modality geometry, validating its effectiveness across diverse multimodal settings.
\end{itemize}

\section{Related Work}
\subsection{Multimodal Representation Learning}
Multimodal representation learning seeks to encode heterogeneous observations while retaining both shared semantics and modality-specific evidence. Early deep models learned coordinated latent spaces through cross-modal reconstruction~\cite{ngiam2011multimodal}, and attention-based architectures later modeled interactions among temporally unaligned streams~\cite{tsai2019mult}. Large-scale contrastive pretraining established transferable image--text spaces~\cite{radford2021clip}, while ImageBind extended the shared-space paradigm to six sensory modalities~\cite{girdhar2023imagebind}. Task-oriented studies have further improved latent fusion, multimodal sentiment representations, and task-aware multimodal clustering~\cite{xue2023multilevel,mu2024mocolnet,jin2026taskaware}. Other approaches explicitly factor modality-invariant and modality-specific components~\cite{hazarika2020misa} or alternate unimodal adaptation to retain branch-specific competence~\cite{zhang2024mla}. These methods primarily optimize cross-modal correspondence or the quality of the fused representation. They do not necessarily ensure that every modality branch preserves usable class-conditional geometry throughout joint optimization. GeoBalance addresses this complementary problem by monitoring the weak modality itself and reconstructing its discriminative geometry only after within-class non-degeneracy and class separability jointly deteriorate.

\subsection{Balanced Multimodal Learning}
Early analyses showed that multimodal classifiers are difficult to optimize because modalities learn at different rates and compete through the fused objective~\cite{wang2020what,huang2022modality,wu2022characterizing}. A first line of work consequently modifies optimization signals: G-Blending and OGM-GE rebalance modality gradients~\cite{wang2020what,peng2022balanced}, AGM adapts their strength according to modality-specific learning status~\cite{li2023agm}, and MMCosine changes classifier geometry in fine-grained audio-visual learning~\cite{xu2023mmcosine}. A second line supplies additional supervision to under-optimized modalities. PMR uses class prototypes~\cite{fan2023pmr}, while PSL transfers discriminative
guidance between modalities through prototype swapping~\cite{chen2026psl}. Alternating unimodal adaptation separates modality-specific learning phases~\cite{zhang2024mla}, and Relearning and ReconBoost introduce targeted relearning or reconstruction~\cite{wei2024diagnosing,hua2024reconboost}. Sample-level valuation and multi-objective coordination further account for heterogeneous modality contributions~\cite{wei2024valuation,wei2024mmpareto}.

Recent work broadens the target beyond equalizing update magnitudes~\cite{xiong2026forcedmodalitybalanceintrinsic}. InfoReg regulates information acquisition during an early learning window~\cite{huang2025inforeg}, while information-theoretic studies further characterize
imbalance through the retention of complementary information during
multimodal fusion~\cite{qin2026deep}. DGL separates encoder and fusion gradients~\cite{wei2025dgl}, while RGM jointly models modality imbalance and gradient conflict through gradient-space modulation~\cite{gao2026reconcile}. CMoB and TCMax characterize causal or total-correlation information~\cite{wang2025cmob,yu2026tcmax}. PDMP instead prioritizes the performance-dominant modality in regimes where strict balance is suboptimal~\cite{wei2026pdmp}. GeoBalance provides a complementary representation-level perspective: it does not force equal modality strength, but preserves a minimum usable geometry for the weak modality and intervenes only after that geometry deteriorates.

\subsection{Representation Geometry and Neural Collapse}
Neural collapse (NC) describes a terminal-phase geometry in which within-class variation contracts, class means approach a Simplex equiangular tight frame, and classifier weights align with those means~\cite{papyan2020prevalence}. Unconstrained-feature analyses characterize the geometric landscape underlying this phenomenon~\cite{DBLP:conf/nips/ZhuDZLYSQ21}, while NC-inspired objectives have been applied to imbalanced and transfer settings~\cite{DBLP:conf/nips/YangCLXLT22,li2022understanding}. GeoBalance does not treat NC as the desired terminal state of every modality. Instead, it borrows the class-symmetric ETF scaffold as a training-only target for a weak branch whose discriminative structure has already deteriorated, and combines it with spectral non-degeneracy to avoid replacing MMC with another low-dimensional solution.

\section{Preliminaries}
\label{sec:preliminaries}

\begin{table*}[!t]
\caption{Core notation. A superscript $(m)$ denotes modality $m$, the superscript $\star$ denotes the preidentified weak modality, a hat denotes a minibatch estimate, and an overbar denotes its exponential moving average.}
\label{tab:notation}
\centering
\footnotesize
\setlength{\tabcolsep}{4.6pt}
\renewcommand{\arraystretch}{1.10}
\begin{tabular}{@{}p{2.0cm}p{5.55cm}p{2.0cm}p{5.55cm}@{}}
\toprule
Symbol & Meaning & Symbol & Meaning \\
\midrule
$\mathbf{z}_i^{(m)}$ & Representation of modality $m$ for sample $i$. & $m_\star,\mathbf{z}_i^\star$ & Weak-modality index and corresponding representation. \\
$\boldsymbol{\Sigma}_{\mathrm{w}}^{(m)}$ & Within-class scatter of modality $m$. & $\boldsymbol{\Sigma}_{\mathrm{b}}^{(m)}$ & Between-class scatter of modality $m$. \\
$D^{(m)}$ & Within-class non-degeneracy measured by effective dimension. & $\mathcal{S}^{(m)}$ & Class separability measured by a Fisher-style trace ratio. \\
$\mathbf{C},\mathbf{c}_k$ & Fixed ETF prototype matrix and prototype of class $k$. & $g_\star,\mathbf{u}_i^\star$ & Training-only projection head and projected weak representation. \\
$\mathbb{I}_t$ & Binary collapse gate at training iteration $t$. & $\mathcal{L}_{\mathrm{joint}}$ & Cross-entropy loss of the fused classifier. \\
$\mathcal{L}_{\mathrm{ETF}}^\star$ & ETF class-reconstruction loss. & $\mathcal{L}_{\mathrm{spec}}^\star$ & Spectral non-degeneracy loss. \\
$\mathcal{L}_{\mathrm{geo}}^\star$ & Complete weak-modality geometry loss. & $\theta_{\mathrm{sh}}$ & Parameters reached by both joint and geometry objectives. \\
$\mathbf{g}_J,\mathbf{g}_G$ & Joint and geometry gradients on $\theta_{\mathrm{sh}}$. & $\Pi_t,\mathbf{g}_{\mathrm{sh}}$ & Asymmetric projection operator and effective shared gradient. \\
\bottomrule
\end{tabular}
\end{table*}

\subsection{Problem Setup}
\label{sec:problem_setup}
Let $i\in\{1,\ldots,N\}$ index a sample, $m\in\{1,\ldots,M\}$ index a modality, and $k\in\{1,\ldots,K\}$ index a class.
We consider a supervised dataset
$\mathcal{D}=\{(\mathbf{x}_i,y_i)\}_{i=1}^{N}$ containing $N$ labeled samples from $K$ classes and $M$ modalities.
For sample $i$, $\mathbf{x}_i$ denotes the complete multimodal input, $x_i^{(m)}$ denotes its observation from modality $m$, and $y_i\in\{1,\ldots,K\}$ denotes its ground-truth class label:
\begin{equation}
\mathbf{x}_i=\{x_i^{(m)}\}_{m=1}^{M}.
\label{eq:multimodal_input}
\end{equation}

For modality $m$, $\mathcal{X}_m$ denotes its raw input space and
$f_m:\mathcal{X}_m\rightarrow\mathbb{R}^{d_m}$ denotes its encoder.
The encoder maps $x_i^{(m)}$ to the modality-specific representation
\begin{equation}
\mathbf{z}_i^{(m)}
=f_m(x_i^{(m)})
\in\mathbb{R}^{d_m},
\label{eq:modality_feature}
\end{equation}
where $d_m$ is the representation dimension.
A fusion module $\phi$ combines the $M$ modality representations, and a classifier $h$ maps the fused representation to $K$ class scores.
Standard joint training minimizes the average cross-entropy loss
\begin{equation}
\mathcal{L}_{\mathrm{joint}}
=\frac{1}{N}\sum_{i=1}^{N}
\mathrm{CE}\!\left(
 h\!\left(\phi(\{\mathbf{z}_i^{(m)}\}_{m=1}^{M})\right),
 y_i
\right),
\label{eq:joint_objective}
\end{equation}
where $\mathrm{CE}(\cdot,\cdot)$ denotes cross entropy.

Before joint training, each modality is trained individually under the same setting.
The modality with the lowest training performance is designated the \emph{weak modality}; its modality index is denoted by $m_\star$.
For sample $i$, the corresponding weak-modality representation is abbreviated as
\begin{equation}
\mathbf{z}_i^\star
=\mathbf{z}_i^{(m_\star)}
\in\mathbb{R}^{d_\star},
\qquad
d_\star=d_{m_\star}.
\label{eq:weak_feature}
\end{equation}
The designation is fixed before joint training and is relative to the dataset.

\subsection{Geometric Health}
\label{sec:geometric_health}
Optimization statistics describe how a modality influences the training objective, but do not establish whether its representation remains usable for classification.
We characterize the \emph{geometric health} of modality $m$ through two complementary properties.
\emph{Within-class non-degeneracy} measures how many independent directions are used by class-centered variation.
\emph{Class separability} measures how far class means are separated relative to within-class dispersion.
A representation may use many directions without arranging classes discriminatively, or may separate class means using only a narrow set of directions; both properties are therefore required.

Let $y$ denote the random class label of a sampled example, and let
$\mathbf{Z}^{(m)}$ denote the random representation obtained by encoding modality $m$ of that example.
For class $k$, $\pi_k=\Pr(y=k)$ denotes its prior probability and
\begin{equation}
\boldsymbol{\mu}_k^{(m)}
=\mathbb{E}[\mathbf{Z}^{(m)}\mid y=k]
\label{eq:class_mean}
\end{equation}
denotes its class-conditional mean.
The global mean of modality $m$ is
$\boldsymbol{\mu}^{(m)}
=\sum_{k=1}^{K}\pi_k\boldsymbol{\mu}_k^{(m)}$.
Using $(\cdot)^\top$ for transpose, the within-class and between-class scatter matrices are
\begin{align}
\boldsymbol{\Sigma}^{(m)}_{\mathrm{w}}
&=\sum_{k=1}^{K}\pi_k
\mathbb{E}\!\left[
(\mathbf{Z}^{(m)}-\boldsymbol{\mu}^{(m)}_k)
(\mathbf{Z}^{(m)}-\boldsymbol{\mu}^{(m)}_k)^{\top}
\mid y=k
\right], \nonumber\\
\boldsymbol{\Sigma}^{(m)}_{\mathrm{b}}
&=\sum_{k=1}^{K}\pi_k
(\boldsymbol{\mu}^{(m)}_k-\boldsymbol{\mu}^{(m)})
(\boldsymbol{\mu}^{(m)}_k-\boldsymbol{\mu}^{(m)})^{\top}.
\label{eq:population_scatter}
\end{align}

\begin{definition}[Within-class non-degeneracy]
Let $\mathbf{S}$ be a positive semidefinite matrix and let
$\mathrm{tr}(\mathbf{S})$ denote its trace.
Its effective dimension is measured by the participation ratio
\begin{equation}
d_{\mathrm{eff}}(\mathbf{S})
=\frac{\mathrm{tr}(\mathbf{S})^2}
       {\mathrm{tr}(\mathbf{S}^2)+\epsilon},
\label{eq:effective_dimension}
\end{equation}
where $\epsilon>0$ prevents numerical division by zero.
The within-class non-degeneracy of modality $m$ is
\begin{equation}
D^{(m)}
=d_{\mathrm{eff}}\!\left(\boldsymbol{\Sigma}^{(m)}_{\mathrm{w}}\right).
\label{eq:population_dimension}
\end{equation}
A larger $D^{(m)}$ means that class-centered variation is distributed across more active directions.
Because the participation ratio is invariant to uniform rescaling, within-class contraction alone does not reduce $D^{(m)}$; the value decreases when energy becomes concentrated in fewer directions.
\end{definition}

\begin{definition}[Class separability]
The class separability of modality $m$ is measured by the Fisher-style trace ratio
\begin{equation}
\mathcal{S}^{(m)}
=\frac{\mathrm{tr}(\boldsymbol{\Sigma}^{(m)}_{\mathrm{b}})}
       {\mathrm{tr}(\boldsymbol{\Sigma}^{(m)}_{\mathrm{w}})+\epsilon}.
\label{eq:class_separability}
\end{equation}
A larger $\mathcal{S}^{(m)}$ means that class means are farther apart relative to within-class dispersion.
\end{definition}

\textbf{Manifold modality collapse.} We use \emph{manifold modality collapse} to describe a coupled degradation of the weak modality's class-conditional geometry during joint training.
Relative to an earlier reference state of the same branch, MMC is characterized by a simultaneous decrease in within-class effective dimensionality $D^{(m_\star)}$ and class separability $\mathcal{S}^{(m_\star)}$.
The conjunction is essential: a low-dimensional representation can remain discriminative when its classes are well separated, whereas poor class separation alone does not necessarily imply dimensional collapse.
We therefore regard neither quantity as an independent indicator of MMC.

The population-level quantities in
\cref{eq:population_dimension,eq:class_separability}
characterize the two aspects of MMC.
In \cref{sec:online_monitor}, we introduce their minibatch estimators and a reference-based criterion for detecting this coupled degradation during training.

\subsection{Simplex ETF}
\label{sec:etf_prelim}
Once class separability deteriorates, the joint loss can continue decreasing through the dominant modality and need not provide a reliable class-level target for the weak modality.
We therefore use a fixed, class-symmetric scaffold that is independent of the dominant-modality representation.
Let $d_e$ denote the dimension of an auxiliary space and let
\begin{equation}
\mathbf{C}
=[\mathbf{c}_1,\ldots,\mathbf{c}_K]
\in\mathbb{R}^{d_e\times K},
\qquad d_e\ge K-1,
\label{eq:prototype_matrix}
\end{equation}
where $\mathbf{c}_k$ is the unit-norm prototype assigned to class $k$.
A Simplex equiangular tight frame satisfies
\begin{equation}
\mathbf{C}^{\top}\mathbf{C}
=\frac{K}{K-1}\left(
\mathbf{I}_K-\frac{1}{K}\mathbf{1}\mathbf{1}^{\top}
\right),
\label{eq:etf_scaffold}
\end{equation}
where $\mathbf{I}_K$ is the $K\times K$ identity matrix and $\mathbf{1}$ is the $K$-dimensional all-ones vector.
Consequently,
$\mathbf{c}_k^\top\mathbf{c}_{k'}=-1/(K-1)$ for any two distinct classes $k\ne k'$.
This equal-angle configuration provides a simple target for class separation.

\section{Method}
\label{sec:method}
\begin{figure*}[!htbp]
  \centering
    \includegraphics[width=1\linewidth]{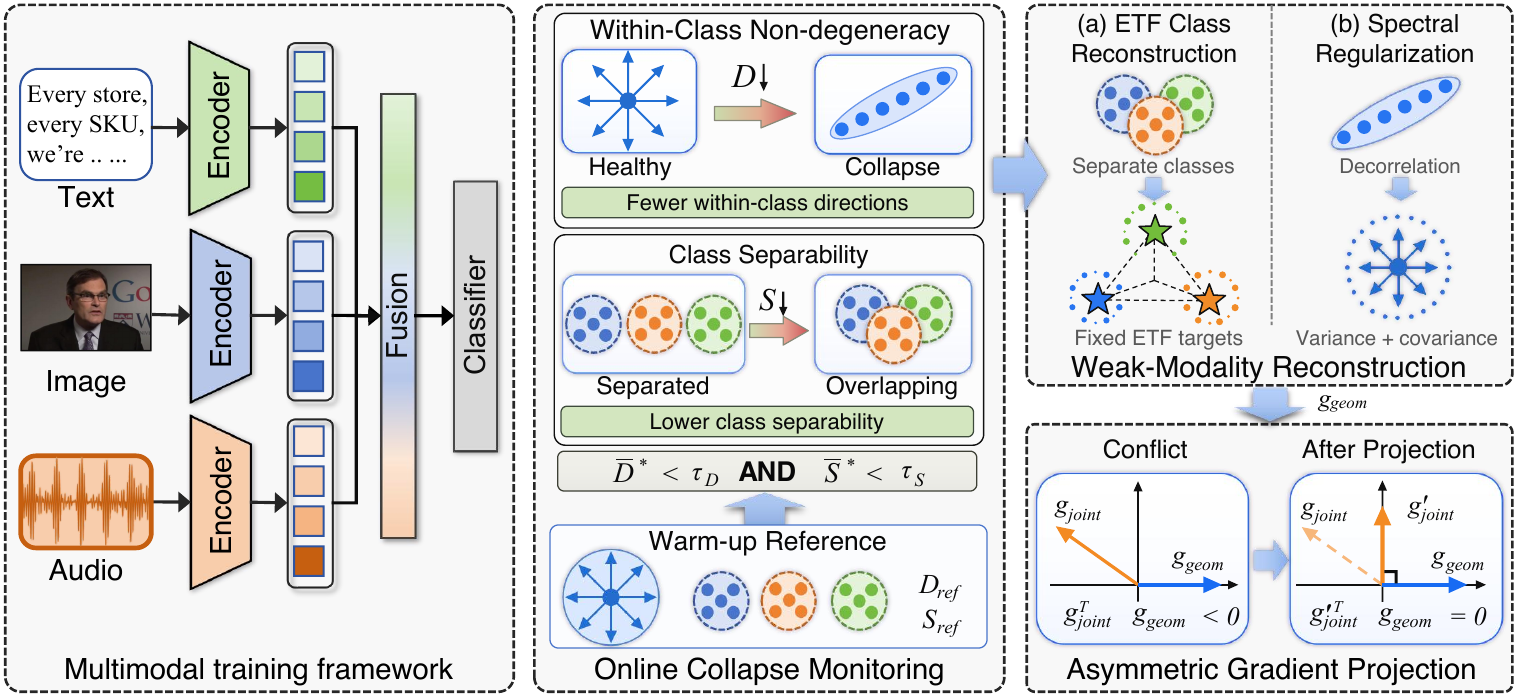}
  \caption{Overview of GeoBalance. Detached weak-modality representations yield minibatch estimates of within-class effective dimensionality and class separability, which are smoothed and compared with early-training references. The collapse gate activates reconstruction only when both signals fall below their thresholds. ETF reconstruction supplies a class-symmetric target, spectral regularization counters directional concentration, and asymmetric gradient projection prevents a conflicting joint update from undoing reconstruction.}
  \label{fig:method}
\end{figure*}

\subsection{Overview}
\label{sec:method_overview}
Modality imbalance is not merely an optimization imbalance: during joint training, the weak modality can progressively lose both within-class non-degeneracy and class separability, leaving its representation geometrically degraded even when the fused objective continues to improve. To address this problem, we propose \emph{GeoBalance}, a monitor-reconstruct-project framework, as illustrated in \cref{fig:method}. The monitor estimates within-class effective dimensionality and class separability from detached minibatch features and compares their smoothed values with early-training references. When both signals fall below their reference thresholds, the collapse gate activates weak-modality reconstruction, which combines a fixed ETF target for class separability with spectral regularization against directional concentration. If the joint gradient conflicts with the geometry gradient, asymmetric projection removes only the conflicting component of the joint gradient while preserving the reconstruction direction. When the gate is inactive, the parameter update reduces to standard joint training. All auxiliary components are used only during training and are removed at inference.

\subsection{Online Collapse Monitoring}
\label{sec:online_monitor}
Since the weak modality can still develop useful structure during early joint training, imposing reconstruction from the outset may unnecessarily constrain normal representation learning. GeoBalance therefore monitors within-class effective dimensionality and class separability online, and activates reconstruction only when both deteriorate relative to an early-training reference. The statistics are computed from detached features so that monitoring itself does not affect optimization.

At training iteration $t$, let $\mathcal{B}_t$ denote a minibatch containing $B$ samples, and let $\mathcal{Y}_t$ denote the set of class labels represented in that minibatch.
For class $k\in\mathcal{Y}_t$, let
$\mathcal{B}_{t,k}=\{i\in\mathcal{B}_t:y_i=k\}$ be the class-$k$ subset,
$n_{k,t}=|\mathcal{B}_{t,k}|$ its number of samples, and
$p_{k,t}=n_{k,t}/B$ its minibatch proportion.
The weak-modality class mean and overall minibatch mean are
\begin{equation}
\widehat{\boldsymbol{\mu}}_{k,t}^{\star}
=\frac{1}{n_{k,t}}
\sum_{i\in\mathcal{B}_{t,k}}\mathbf{z}_i^\star,
\qquad
\widehat{\boldsymbol{\mu}}_t^{\star}
=\sum_{k\in\mathcal{Y}_t}p_{k,t}
\widehat{\boldsymbol{\mu}}_{k,t}^{\star},
\label{eq:batch_means}
\end{equation}
where a hat denotes a quantity estimated from the current minibatch.
We class-center each weak-modality representation and apply
$\mathrm{stopgrad}(\cdot)$, which returns its input during the forward pass but blocks its gradient:
\begin{equation}
\widetilde{\mathbf{z}}_i^\star
=\mathrm{stopgrad}\!\left(
\mathbf{z}_i^\star-
\widehat{\boldsymbol{\mu}}_{y_i,t}^{\star}
\right).
\label{eq:batch_centering}
\end{equation}
Let
$\widetilde{\mathbf{Z}}_t^\star\in\mathbb{R}^{B\times d_\star}$
be the matrix obtained by stacking the $B$ centered row vectors.
Instead of constructing a $d_\star\times d_\star$ feature scatter, we form the $B\times B$ Gram matrix
\begin{equation}
\mathbf{G}_t^\star
=\widetilde{\mathbf{Z}}_t^\star
 \widetilde{\mathbf{Z}}_t^{\star\top}.
\label{eq:gram_matrix}
\end{equation}
The Gram matrix and the corresponding feature scatter share the same nonzero eigenvalues.
The minibatch estimate of within-class non-degeneracy is therefore
\begin{equation}
\widehat{D}_t^\star
=d_{\mathrm{eff}}(\mathbf{G}_t^\star).
\label{eq:batch_dimension}
\end{equation}

Class separability is estimated without explicitly constructing the population scatter matrices.
Let $\widehat{T}_{\mathrm b,t}^\star$ and
$\widehat{T}_{\mathrm w,t}^\star$ denote the minibatch between-class and within-class trace terms:
\begin{align}
\widehat{T}_{\mathrm b,t}^\star
&=\sum_{k\in\mathcal{Y}_t}p_{k,t}
\left\|
\widehat{\boldsymbol{\mu}}_{k,t}^{\star}
-\widehat{\boldsymbol{\mu}}_t^{\star}
\right\|_2^2, \nonumber\\
\widehat{T}_{\mathrm w,t}^\star
&=\sum_{k\in\mathcal{Y}_t}
\frac{p_{k,t}}{n_{k,t}}
\sum_{i\in\mathcal{B}_{t,k}}
\left\|
\mathbf{z}_i^\star-
\widehat{\boldsymbol{\mu}}_{k,t}^{\star}
\right\|_2^2.
\label{eq:batch_traces}
\end{align}
The minibatch estimate of class separability is
\begin{equation}
\widehat{\mathcal{S}}_t^\star
=\frac{\widehat{T}_{\mathrm b,t}^\star}
       {\widehat{T}_{\mathrm w,t}^\star+\epsilon}.
\label{eq:batch_separability}
\end{equation}
The weights $p_{k,t}$ make the two trace estimates consistent with the empirical class proportions in the minibatch; the smoothing step below mitigates the remaining batch-to-batch fluctuations.

Because the two estimates are noisy at minibatch scale, we smooth them with an exponential moving average (EMA).
Let $\rho\in[0,1)$ denote the EMA smoothing factor and let an overbar denote an EMA-smoothed quantity:
\begin{align}
\overline{D}_t^\star
&=\rho\overline{D}_{t-1}^\star
 +(1-\rho)\widehat{D}_t^\star, \nonumber\\
\overline{\mathcal{S}}_t^\star
&=\rho\overline{\mathcal{S}}_{t-1}^\star
 +(1-\rho)\widehat{\mathcal{S}}_t^\star.
\label{eq:ema_monitor}
\end{align}
Each EMA state is initialized with its first minibatch estimate.
During the first $T_0$ iterations, where $T_0$ denotes the warm-up length, we collect the reference values
\begin{equation}
D^{\mathrm{ref}}
=\operatorname{median}_{1\le s\le T_0}
\overline{D}_s^\star,
\qquad
\mathcal{S}^{\mathrm{ref}}
=\operatorname{median}_{1\le s\le T_0}
\overline{\mathcal{S}}_s^\star,
\label{eq:warmup_reference}
\end{equation}
where $s$ indexes the warm-up iterations.
Let $\eta\in(0,1)$ denote a relative threshold ratio.
The resulting thresholds are
$\tau_D=\eta D^{\mathrm{ref}}$ and
$\tau_{\mathcal S}=\eta\mathcal{S}^{\mathrm{ref}}$.
The binary collapse gate is
\begin{equation}
\mathbb{I}_t
=\mathbf{1}\!\left[
\overline{D}_t^\star<\tau_D
\ \wedge\
\overline{\mathcal{S}}_t^\star<\tau_{\mathcal S}
\right],
\label{eq:collapse_gate}
\end{equation}
where $\mathbf{1}[\cdot]$ is the indicator function.
A larger $\eta$ places the thresholds closer to the warm-up references and makes the monitor more sensitive; a smaller $\eta$ requires a larger deterioration and is more conservative.

\subsection{Weak-Modality Reconstruction}
\label{sec:weak_reconstruction}
The monitor identifies when the coupled degradation criterion is met, but it does not provide a target for improving the weak-modality representation. We therefore use two complementary surrogate objectives: ETF reconstruction supplies a class-symmetric target for class separability, whereas spectral regularization discourages projected variation from concentrating in a few directions. Together, they address the two symptoms of MMC without forcing the weak modality to imitate the dominant-modality representation.

\textbf{ETF reconstruction.}
Let $g_\star:\mathbb{R}^{d_\star}\rightarrow\mathbb{R}^{d_e}$ denote a lightweight training-only projection head.
It maps the weak-modality representation $\mathbf{z}_i^\star$ to
\begin{equation}
\mathbf{u}_i^\star
=g_\star(\mathbf{z}_i^\star)
\in\mathbb{R}^{d_e},
\label{eq:projected_feature}
\end{equation}
where $d_e=\max(d_\star,K-1)$ in our implementation.
The fixed prototype matrix $\mathbf{C}$ is defined in \cref{eq:etf_scaffold}.
For sample $i$, the $K$-dimensional cosine-logit vector is
\begin{equation}
\boldsymbol{\ell}_i^\star
=s_\star\mathbf{C}^{\top}
\frac{\mathbf{u}_i^\star}
     {\|\mathbf{u}_i^\star\|_2+\epsilon},
\label{eq:etf_logits}
\end{equation}
where $s_\star>0$ is a learnable logit scale initialized to $10$.
The ETF reconstruction loss is
\begin{equation}
\mathcal{L}_{\mathrm{ETF}}^\star
=\frac{1}{B}
\sum_{i\in\mathcal{B}_t}
\mathrm{CE}(\boldsymbol{\ell}_i^\star,y_i).
\label{eq:etf_loss}
\end{equation}
The fixed prototypes provide a modality-independent class target.
They constrain class directions in the auxiliary space but do not prescribe the complete distribution of the weak-modality representation.

\textbf{Spectral regularization.}
The ETF aims to align class directions while still relying on only a few active dimensions, so preventing dimensional collapse requires more than this measure.
Let $\mathbf{U}_t^\star\in\mathbb{R}^{B\times d_e}$ stack the $B$ projected representations in the minibatch.
Let
$\overline{\mathbf{u}}_t^\star=B^{-1}\sum_{i\in\mathcal{B}_t}\mathbf{u}_i^\star$
be their mean and
$\mathbf{U}_{c,t}^\star
=\mathbf{U}_t^\star-\mathbf{1}\overline{\mathbf{u}}_t^{\star\top}$
their centered matrix.
For auxiliary dimension $j\in\{1,\ldots,d_e\}$, let
$\sigma_{j,t}$ denote the minibatch standard deviation of that dimension.
The covariance matrix is
\begin{equation}
\mathbf{Q}_t^\star
=\frac{1}{B-1}
\mathbf{U}_{c,t}^{\star\top}
\mathbf{U}_{c,t}^\star.
\label{eq:projected_covariance}
\end{equation}
For covariance indices $p$ and $q$, let
$\mathbf{Q}_{t,pq}^\star$ denote entry $(p,q)$ of this matrix.
Following variance-covariance anti-collapse regularization~\cite{bardes2022vicreg}, we define the spectral loss as
\begin{equation}
\mathcal{L}_{\mathrm{spec}}^\star
=
\frac{1}{d_e}\sum_{j=1}^{d_e}
\max(0,\gamma-\sigma_{j,t})
+
\frac{1}{d_e(d_e-1)}
\sum_{p\ne q}(\mathbf{Q}_{t,pq}^\star)^2,
\label{eq:spectral_loss}
\end{equation}
where $\gamma=1$ is a fixed variance floor.
The first term prevents projected dimensions from becoming inactive, whereas the second discourages redundant correlations.
Accordingly, $\mathcal{L}_{\mathrm{spec}}^\star$ is a tractable anti-collapse surrogate rather than a second definition or direct estimator of $D^{(m_\star)}$, whose population form remains defined only in \cref{eq:population_dimension}.

The complete weak-modality geometry loss is
\begin{equation}
\mathcal{L}_{\mathrm{geo}}^\star
=\mathcal{L}_{\mathrm{ETF}}^\star
+\mathcal{L}_{\mathrm{spec}}^\star.
\label{eq:geometry_loss}
\end{equation}
The ETF loss is averaged over the minibatch, and the spectral penalties are normalized by $d_e$; we therefore combine the two terms with unit coefficients.
The collapse gate multiplies the complete geometry loss in \cref{eq:total_objective}, avoiding separate activation schedules for the two terms.

\subsection{Asymmetric Gradient Projection}
\label{sec:gradient_projection}
Weak-modality reconstruction can be ineffective when the joint update on shared parameters repeatedly moves against the geometry objective. Symmetrically modifying both objectives would also weaken the direction specifically introduced to reconstruct the weak modality. We therefore project only the conflicting component of the joint gradient and leave the geometry gradient unchanged.

Let $\theta_{\mathrm{sh}}$ denote the subset of parameters reached by both $\mathcal{L}_{\mathrm{joint}}$ and $\mathcal{L}_{\mathrm{geo}}^\star$. These parameters include the trainable weak-modality encoder or adapter and any upstream module through which both losses backpropagate; the modality encoders need not share weights.
The joint and geometry gradients on this parameter subset are
\begin{equation}
\mathbf{g}_{J}
=\nabla_{\theta_{\mathrm{sh}}}\mathcal{L}_{\mathrm{joint}},
\qquad
\mathbf{g}_{G}
=\nabla_{\theta_{\mathrm{sh}}}\mathcal{L}_{\mathrm{geo}}^\star.
\label{eq:two_gradients}
\end{equation}
The inner product $\langle\mathbf{g}_{J},\mathbf{g}_{G}\rangle$ is negative when the two descent objectives conflict to first order. We therefore define the gated projection operator to selectively correct the joint gradient:
\begin{equation}
\Pi_t(\mathbf{g}_{J};\mathbf{g}_{G})
=
\mathbf{g}_{J}
-
\mathbb{I}_t
\frac{\min\{\langle\mathbf{g}_{J},\mathbf{g}_{G}\rangle,0\}}
     {\|\mathbf{g}_{G}\|_2^2+\epsilon}
\mathbf{g}_{G}.
\label{eq:projection_operator}
\end{equation}
The effective gradient used to update $\theta_{\mathrm{sh}}$ is
\begin{equation}
\mathbf{g}_{\mathrm{sh}}
=\Pi_t(\mathbf{g}_{J};\mathbf{g}_{G})
+\mathbb{I}_t\mathbf{g}_{G}.
\label{eq:shared_gradient}
\end{equation}
When the gate is inactive or the gradients agree, the projection is the identity.
When the gate is active and the gradients conflict, only the component of $\mathbf{g}_{J}$ opposing $\mathbf{g}_{G}$ is removed.

\begin{proposition}
\label{prop:minimal_projection}
Suppose $\mathbb{I}_t=1$ and
$\langle\mathbf{g}_{J},\mathbf{g}_{G}\rangle<0$.
Ignoring the numerical constant $\epsilon$, the first term in \cref{eq:shared_gradient} is the unique solution to the following problem, where $\widetilde{\mathbf{g}}$ denotes a candidate corrected joint gradient:
\begin{equation}
\min_{\widetilde{\mathbf{g}}}
\|\widetilde{\mathbf{g}}-\mathbf{g}_{J}\|_2^2
\quad
\mathrm{s.t.}
\quad
\langle\widetilde{\mathbf{g}},\mathbf{g}_{G}\rangle\ge0.
\label{eq:projection_program}
\end{equation}
\end{proposition}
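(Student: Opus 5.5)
This is the Euclidean projection of $\mathbf{g}_J$ onto the closed half-space $\mathcal{H}=\{\widetilde{\mathbf{g}}:\langle\widetilde{\mathbf{g}},\mathbf{g}_G\rangle\ge0\}$. The plan is to compute that projection in closed form and then check that it coincides with the first term of \cref{eq:shared_gradient}.

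First, simplify the expression under the hypotheses. With $\mathbb{I}_t=1$, $\langle\mathbf{g}_J,\mathbf{g}_G\rangle<0$, and $\epsilon$ dropped, \cref{eq:projection_operator} becomes
\begin{equation*}
\mathbf{g}^\dagger:=\mathbf{g}_J-\frac{\langle\mathbf{g}_J,\mathbf{g}_G\rangle}{\|\mathbf{g}_G\|_2^2}\mathbf{g}_G .
\end{equation*}
The strict negativity of the inner product forces $\mathbf{g}_G\neq\mathbf{0}$, so the denominator is well defined. This is also the only place where nondegeneracy is needed.

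Second, establish existence and uniqueness. The objective $\widetilde{\mathbf{g}}\mapsto\|\widetilde{\mathbf{g}}-\mathbf{g}_J\|_2^2$ is strictly convex and coercive, and $\mathcal{H}$ is a nonempty closed convex set. Hence a minimizer exists and is unique; this is the standard projection theorem onto a closed convex set. It therefore suffices to show that $\mathbf{g}^\dagger$ is optimal.

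Third, verify optimality directly rather than through KKT multipliers, since the direct argument is shorter. Feasibility holds because $\langle\mathbf{g}^\dagger,\mathbf{g}_G\rangle=\langle\mathbf{g}_J,\mathbf{g}_G\rangle-\langle\mathbf{g}_J,\mathbf{g}_G\rangle=0$. Now take any feasible $\widetilde{\mathbf{g}}$. By Cauchy--Schwarz,
\begin{equation*}
\|\mathbf{g}_G\|_2\,\|\widetilde{\mathbf{g}}-\mathbf{g}_J\|_2\ \ge\ \langle\widetilde{\mathbf{g}}-\mathbf{g}_J,\mathbf{g}_G\rangle\ \ge\ -\langle\mathbf{g}_J,\mathbf{g}_G\rangle .
\end{equation*}
This gives $\|\widetilde{\mathbf{g}}-\mathbf{g}_J\|_2\ge|\langle\mathbf{g}_J,\mathbf{g}_G\rangle|/\|\mathbf{g}_G\|_2$. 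The right-hand side is exactly $\|\mathbf{g}^\dagger-\mathbf{g}_J\|_2$, so $\mathbf{g}^\dagger$ attains the lower bound.

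Uniqueness can also be read off from this chain. Equality requires $\widetilde{\mathbf{g}}-\mathbf{g}_J$ to be a nonnegative multiple of $\mathbf{g}_G$, and it requires the constraint to be tight. Together these pin down $\widetilde{\mathbf{g}}=\mathbf{g}^\dagger$.

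The only step needing care is the equality case of Cauchy--Schwarz combined with constraint tightness. Invoking strict convexity from the second step sidesteps this, so I expect no real obstacle.
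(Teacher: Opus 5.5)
Your argument is correct. It uses the same framing as the paper: $\Pi_t(\mathbf{g}_J;\mathbf{g}_G)$ is the Euclidean projection of $\mathbf{g}_J$ onto the half-space $\{\widetilde{\mathbf{g}}:\langle\widetilde{\mathbf{g}},\mathbf{g}_G\rangle\ge0\}$, and uniqueness follows from strict convexity. Where you differ is in how you establish optimality.

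The paper argues constructively. Because $\mathbf{g}_J$ lies outside the half-space, it asserts that the projection lies on the bounding hyperplane and differs from $\mathbf{g}_J$ only along the normal $\mathbf{g}_G$. It then writes $\widetilde{\mathbf{g}}=\mathbf{g}_J-a\mathbf{g}_G$ and solves the boundary condition for $a$. These two structural facts about half-space projections are used without proof.

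You instead take the candidate $\mathbf{g}^\dagger$ as given and certify it. Feasibility is a one-line computation. Your Cauchy--Schwarz chain then gives the explicit lower bound $|\langle\mathbf{g}_J,\mathbf{g}_G\rangle|/\|\mathbf{g}_G\|_2$ on the distance from $\mathbf{g}_J$ to every feasible point, and $\mathbf{g}^\dagger$ attains it.

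The equality case of your chain says the displacement is a nonnegative multiple of $\mathbf{g}_G$ and the constraint is tight. That is exactly a proof of the paper's ``on the boundary, moved only along $\mathbf{g}_G$'' claim. It also yields uniqueness directly, so your Cauchy--Schwarz argument gives both existence and uniqueness without the projection theorem.

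The paper's route is shorter and shows where the formula comes from. Yours is fully self-contained and elementary. The cost is that it verifies a formula it did not derive, which is harmless here because the statement supplies the formula.
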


\begin{proof}
The feasible set is a closed half-space whose boundary is normal to $\mathbf{g}_{G}$. Under the conflict condition, $\mathbf{g}_{J}$ lies outside this half-space, so its Euclidean projection lies on the boundary and differs from $\mathbf{g}_{J}$ only along $\mathbf{g}_{G}$. Writing $\widetilde{\mathbf{g}}=\mathbf{g}_{J}-a\mathbf{g}_{G}$ and enforcing the boundary condition yields $a=\langle\mathbf{g}_{J},\mathbf{g}_{G}\rangle/\|\mathbf{g}_{G}\|_2^2$, which gives \cref{eq:projection_operator}. The solution is unique because the objective is strictly convex.
\end{proof}

The proposition establishes a minimum-change correction of the joint gradient. It is a local first-order safeguard, not a guarantee that the geometry loss decreases monotonically over complete optimizer steps.

\subsection{Optimization Objective and Training}
\label{sec:optimization}
Reconstruction and projection share the same collapse gate produced by the monitor, so all interventions respond to one geometric state. The training objective at iteration $t$ is
\begin{equation}
\mathcal{L}_{\mathrm{total}}(t)
=
\mathcal{L}_{\mathrm{joint}}
+
\mathbb{I}_t\mathcal{L}_{\mathrm{geo}}^\star.
\label{eq:total_objective}
\end{equation}
Parameters reached only by the joint objective are updated by standard backpropagation of $\mathcal{L}_{\mathrm{joint}}$.
For $\theta_{\mathrm{sh}}$, the optimizer consumes
$\mathbf{g}_{\mathrm{sh}}$ from \cref{eq:shared_gradient} before momentum or adaptive moments are updated.
When $\mathbb{I}_t=0$, the geometry loss and its additional gradient can be skipped.

\begin{algorithm}[!htbp]
\caption{Monitor-reconstruct-project training of GeoBalance}
\label{alg:geobalance}
\begin{algorithmic}[1]
\REQUIRE Dataset $\mathcal{D}$; encoders $\{f_m\}_{m=1}^{M}$; fusion module $\phi$; classifier $h$; weak-modality index $m_\star$; projection head $g_\star$; ETF matrix $\mathbf{C}$; monitor parameters $T_0$, $\rho$, and $\eta$; optimizer.
\STATE Initialize model parameters, $g_\star$, logit scale $s_\star$, and EMA states.
\STATE Run $T_0$ joint-training iterations while updating the EMA statistics; compute $D^{\mathrm{ref}}$ and $\mathcal{S}^{\mathrm{ref}}$ using \cref{eq:warmup_reference}.
\FOR{each training iteration $t$ after warm-up}
    \STATE Sample minibatch $\mathcal{B}_t$ and compute $\mathcal{L}_{\mathrm{joint}}$.
    \STATE \textbf{Stage 1---Monitor:} detach $\mathbf{z}_i^\star$, update $\overline{D}_t^\star$ and $\overline{\mathcal{S}}_t^\star$, and evaluate $\mathbb{I}_t$.
    \IF{$\mathbb{I}_t=0$}
        \STATE Perform the standard joint-training update.
    \ELSE
        \STATE \textbf{Stage 2---Reconstruct:} compute $\mathcal{L}_{\mathrm{geo}}^\star=\mathcal{L}_{\mathrm{ETF}}^\star+\mathcal{L}_{\mathrm{spec}}^\star$.
        \STATE \textbf{Stage 3---Project:} compute $\mathbf{g}_{J}$ and $\mathbf{g}_{G}$ on $\theta_{\mathrm{sh}}$.
        \STATE Compute $\mathbf{g}_{\mathrm{sh}}=\Pi_t(\mathbf{g}_{J};\mathbf{g}_{G})+\mathbf{g}_{G}$.
        \STATE Update $\theta_{\mathrm{sh}}$ with $\mathbf{g}_{\mathrm{sh}}$, joint-only parameters with $\nabla\mathcal{L}_{\mathrm{joint}}$, and geometry-only parameters with $\nabla\mathcal{L}_{\mathrm{geo}}^\star$.
    \ENDIF
\ENDFOR
\end{algorithmic}
\end{algorithm}

Algorithm~\ref{alg:geobalance} makes the three roles explicit. The monitor first detects coupled degradation; reconstruction then supplies class-separability and anti-concentration signals; projection finally protects the reconstruction direction when the joint and geometry gradients conflict.

Forming the Gram matrix requires $O(B^2d_\star)$ operations and $O(B^2)$ memory. Computing the auxiliary covariance requires $O(Bd_e^2)$ operations and $O(d_e^2)$ memory. These costs occur only during training. The monitor is evaluated at every iteration, whereas reconstruction and gradient projection are skipped whenever the collapse gate is inactive. At inference, $g_\star$, $\mathbf{C}$, the monitor, and all gate states are removed; the deployed architecture and inference cost are therefore unchanged.

\section{Experiments}
\label{sec:experiments}

\subsection{Evaluation Protocol}
\label{sec:evaluation_protocol}

\begin{table}[!htbp]
\caption{Benchmarks used in the experiments.}
\label{tab:datasets}
\centering
\small
\renewcommand{\arraystretch}{1.10}

\begin{tabular*}{0.85\columnwidth}
{@{\extracolsep{\fill}}ll@{}}
\toprule
Benchmark & Modalities \\
\midrule
Kinetics-Sounds & Video + Audio \\
CREMA-D & Video + Audio \\
VGGSound & Video + Audio \\
CMU-MOSEI & Text + Video + Audio \\
UCF101 & RGB + Optical Flow \\
Food-101 & Text + Image\\
\bottomrule
\end{tabular*}
\end{table}

\textbf{Benchmarks.} We evaluate GeoBalance on six public benchmarks covering audio, visual, text, and motion modalities. \Cref{tab:datasets} summarizes their modality combinations. The benchmarks differ in scale, modality composition, and the degree to which one modality can dominate the task, allowing us to examine whether the same geometric failure appears across heterogeneous settings.

Kinetics-Sounds~\cite{arandjelovic2017look} contains 31 audio-visual event classes and tests whether sound and appearance are learned cooperatively.
CREMA-D~\cite{cao2014crema} comprises video and audio recordings of 91 actors portraying six emotion categories, with facial dynamics and speech prosody providing complementary cues of different discriminative strengths.
VGGSound~\cite{Chen20} is a large-scale in-the-wild audio-visual dataset, providing a more diverse label space.
CMU-MOSEI~\cite{bagher-zadeh-etal-2018-multimodal} provides text, visual, and acoustic streams for sentiment analysis and is the tri-modal benchmark.
UCF101~\cite{soomro2012ucf101dataset101human} is evaluated with RGB appearance and optical-flow motion streams.
Food-101~\cite{wang2015recipe} is evaluated in an image-text setting and provides a modality combination distinct from the audio-visual benchmarks.

\textbf{Backbones and pretraining.}
Dataset-specific backbone architectures, preprocessing pipelines, and
data partitions follow BalanceBenchmark~\cite{xu2025balancebenchmark}.
Kinetics-Sounds, CREMA-D, and VGGSound use modality-specific ResNet-18 branches~\cite{he2016resnet} for RGB frames and one-channel audio time-frequency maps, with the input convolution adapted to the corresponding modality. UCF101 uses separate ResNet-18 branches for RGB appearance and two-channel optical flow. Food-101 uses a ResNet-18 image branch and a BERT~\cite{devlin2019bert} text branch, whereas CMU-MOSEI applies modality-specific Transformer encoders to the released unaligned text, acoustic, and visual feature sequences.

\textbf{Preprocessing.}
For the audio-visual datasets, videos are sampled at 16 frames per second and resized to $224\times224$, while audio is resampled to 16~kHz and converted to log-Mel filterbanks with a 25-ms analysis window and a 10-ms hop. Kinetics-Sounds and VGGSound select three temporally distributed frames per clip, and CREMA-D selects two. During training, a random frame is sampled from each temporal segment with random resized cropping and horizontal flipping, whereas evaluation uses segment midpoints and deterministic resizing. RGB inputs are normalized with ImageNet statistics. UCF101 similarly samples three temporally matched RGB and optical-flow observations, with each flow input formed by its horizontal and vertical components. For Food-101, images are resized to $224\times224$; text is tokenized with BERT, then padded or truncated to 40 tokens. CMU-MOSEI uses the standard unaligned sequences with 300-dimensional text, 74-dimensional acoustic, and 35-dimensional visual features.

\textbf{Data splits.}
We reuse the released BalanceBenchmark partitions without dataset-level resampling. Kinetics-Sounds uses the provided training and test CSV files; VGGSound follows the official train/test field in its metadata; CREMA-D and UCF101 use the released training and test lists; Food-101 uses the phase-specific title CSV files and image directories; and CMU-MOSEI uses the standard train/validation/test keys in the unaligned package.

\textbf{Baselines.}
We organize the compared methods by the level at which they address modality imbalance.
\begin{itemize}
    \item \emph{Optimization-control methods} alter how modality-specific objectives influence training. G-Blending~\cite{wang2020what} blends modality objectives according to their generalization behavior; Greedy~\cite{wu2022characterizing} counteracts the tendency of a multimodal model to follow the easiest modality; OGM-GE~\cite{peng2022balanced} modulates modality gradients online while enhancing the resulting update; and LFM~\cite{NEURIPS2024_71b17f00} dynamically learns the modality gap during training. These methods primarily control the allocation or timing of optimization signals.
    \item \emph{Representation-enhancement methods} provide additional class or reconstruction supervision. MMCosine~\cite{xu2023mmcosine} reshapes classifier interactions through a cosine objective; PMR~\cite{fan2023pmr} uses class prototypes to rebalance under-optimized modalities; Relearning~\cite{wei2024diagnosing} diagnoses modality dominance and introduces a targeted relearning stage; and ReconBoost~\cite{hua2024reconboost} uses reconstruction-oriented auxiliary training to reconcile modality learning. 
    \item \emph{Multi-objective and geometry-aware methods} coordinate several training objectives. MMPareto~\cite{wei2024mmpareto} treats multimodal learning as a multi-objective problem with unimodal assistance, whereas GGDM~\cite{hu2025ggdm} modulates optimization through geometric gradient divergence. They are reported under the trainable-backbone protocol because their released settings optimize the modality branches end-to-end.
\end{itemize}

\textbf{Metrics and comparison protocols.}
We report classification accuracy and macro-$F_1$. Accuracy measures overall correctness, whereas macro-$F_1$ assigns equal weight to each class and is therefore more sensitive to whether improvements are distributed across classes. We report fixed-backbone and trainable-backbone results separately because the two protocols provide different degrees of freedom for reshaping the weak-modality representation. Under the fixed-backbone protocol, the pretrained feature extractor remains frozen, while the subsequent modality-specific projection or adapter, fusion module, and classifier remain trainable.

\begin{table*}[!htbp]
\caption{Accuracy / macro-$F_1$ (\%) on six multimodal benchmarks. The fixed- and trainable-backbone blocks use different protocols and should be compared only within each block. Best and second-best complete results in each block are shown in \textbf{bold} and \underline{underline}. Averages use all six datasets and are omitted for incomplete rows.}
\label{tab:main_results}
\centering
\footnotesize
\setlength{\tabcolsep}{3.0pt}
\renewcommand{\arraystretch}{1.10}
\resizebox{\textwidth}{!}{%
\begin{tabular}{@{}lccccccc@{}}
\toprule
Method &
\makecell{Kinetics-\\Sounds} &
CREMA-D & VGGSound & CMU-MOSEI & UCF101 & Food-101 & Average \\
\midrule
\rowcolor{gbgray}\multicolumn{8}{l}{\textit{Fixed-backbone protocol}}\\
Joint training
& 65.42 / 65.23 & 65.53 / 65.11 & 47.79 / 46.77 & 79.02 / 70.12 & 80.92 / 80.22 & 91.33 / 91.34 & 71.67 / 69.80 \\
G-Blending~\cite{wang2020what}
& 68.47 / 66.32 & 71.53 / 71.52 & \best{51.41} / \best{50.36} & 79.63 / 73.27 & \best{85.12} / \best{84.77} & \best{92.72} / \best{92.66} & \second{74.81} / \second{73.15} \\
Greedy~\cite{wu2022characterizing}
& 66.77 / 66.42 & 66.43 / 55.51 & 48.62 / 47.42 & -- / -- & -- / -- & -- / -- & -- / -- \\
OGM-GE~\cite{peng2022balanced}
& 67.08 / 66.98 & 67.53 / 67.93 & 50.32 / 48.73 & \second{80.43} / 73.55 & 82.11 / 81.41 & 91.55 / 91.53 & 73.17 / 71.69 \\
MMCosine~\cite{xu2023mmcosine}
& 67.49 / 67.09 & 67.19 / 67.34 & 48.73 / 47.66 & 80.38 / \second{73.67} & 82.97 / 82.47 & 92.16 / 92.12 & 73.15 / 71.73 \\
PMR~\cite{fan2023pmr}
& 67.16 / 66.89 & 67.29 / 67.23 & \second{50.48} / 49.22 & 79.79 / 72.06 & 81.73 / 81.38 & 92.08 / 92.03 & 73.09 / 71.47 \\
LFM~\cite{NEURIPS2024_71b17f00}
& 66.37 / 66.02 & 70.02 / 69.55 & 47.45 / 46.50 & 79.90 / 71.60 & \second{84.95} / \second{84.35} & \second{92.58} / \second{92.54} & 73.54 / 71.76 \\
Relearning~\cite{wei2024diagnosing}
& 65.91 / 65.58 & 71.23 / 73.72 & 48.17 / 48.02 & 78.55 / 69.92 & 82.66 / 82.03 & 91.43 / 91.53 & 72.99 / 71.80 \\
ReconBoost~\cite{hua2024reconboost}
& \second{68.68} / \second{67.92} & \second{74.57} / \second{74.27} & 48.26 / 47.33 & \best{81.01} / \best{74.13} & 82.81 / 82.13 & 92.21 / 92.15 & 74.59 / 72.99 \\
\rowcolor{gbblue!8}GeoBalance (Frozen)
& \best{72.30} / \best{71.63} & \best{77.67} / \best{78.12} & 49.91 / \second{49.79} & 79.03 / 71.16 & 83.12 / 83.18 & 92.45 / 92.33 & \best{75.75} / \best{74.37} \\
\midrule
\rowcolor{gbgray}\multicolumn{8}{l}{\textit{Trainable-backbone protocol}}\\
GGDM~\cite{hu2025ggdm}
& 74.64 / 73.80 & \second{83.41} / \second{84.03} & 50.23 / 49.98 & -- / -- & 81.69 / 80.77 & 92.41 / 92.33 & -- / -- \\
MMPareto~\cite{wei2024mmpareto}
& 74.55 / 74.21 & 79.97 / 80.57 & 49.19 / 48.37 & \second{81.21} / \second{74.66} & \second{85.12} / \second{84.68} & \second{92.85} / \second{92.73} & 77.15 / 75.87 \\
GeoBalance\ (Scratch)
& \second{77.73} / \second{76.91} & 83.27 / 83.14 & \second{51.66} / \second{51.02} & 81.13 / 73.77 & 84.78 / 84.23 & 91.93 / 91.87 & \second{78.42} / \second{76.82} \\
\rowcolor{gbblue!8}GeoBalance\ (Fine-tuned)
& \best{78.64} / \best{77.76} & \best{87.33} / \best{86.63} & \best{52.19} / \best{51.38} & \best{82.21} / \best{76.22} & \best{85.67} / \best{84.94} & \best{93.13} / \best{92.97} & \best{79.86} / \best{78.32} \\
\bottomrule
\end{tabular}}
\end{table*}

Accordingly, GeoBalance is evaluated under three configurations. GeoBalance (Frozen) uses pretrained encoder checkpoints and keeps the feature extractors fixed during training. GeoBalance (Scratch) trains the trainable modality branches from random initialization. GeoBalance (Fine-tuned) initializes from the same unimodal checkpoints as Frozen but updates the complete modality branches end-to-end.

\textbf{Method settings.}
All methods use the same data split, backbone protocol, optimizer family, and training length within each comparison block. Models are optimized with AdamW for 100 epochs on four NVIDIA GeForce RTX 4090 GPUs. The relative threshold ratio is fixed to $\eta=0.7$ across datasets. The warm-up length is one epoch and the EMA factor is $\rho=0.99$. The ETF dimension $d_e=\max(d_\star,K-1)$ is determined by the weak-representation dimension and number of classes; the logit scale $s_\star$ is learned, while the variance floor $\gamma=1$ and numerical constant $\epsilon=10^{-8}$ are fixed. These monitoring parameters are kept unchanged across datasets. Further analysis on the effects of $\eta$, $\rho$, and $T_0$ are provided in Section \ref{sec:hyperparameter_analysis}.

\subsection{Main Results}
\label{sec:main_results}

\Cref{tab:main_results} reveals that the value of geometry reconstruction depends on where the bottleneck lies. Under fixed backbones, GeoBalance is strongest on Kinetics-Sounds and CREMA-D. Both datasets combine modalities whose discriminative cues and optimization difficulty can differ substantially: appearance can solve many events before audio is well organized, while facial and vocal emotion cues exhibit different noise and actor variability. In such settings, the weak modality is not merely assigned a smaller update; its trainable projection can lack a stable class structure. The large gains are therefore consistent with MMC being a representation bottleneck rather than only an imbalance in gradient magnitude.

The fixed-backbone results on VGGSound, UCF101, and Food-101 provide an important counterpoint. G-Blending or LFM remains strongest on these benchmarks. VGGSound has a much larger label space and substantially more data, so a fixed class-symmetric ETF target may offer less advantage than methods that continuously reallocate optimization across already informative pretrained features. For UCF101 and Food-101, the frozen RGB, motion, image, or text features may already contain useful geometry; the main remaining problem can then be how the fusion layers allocate their influence. These results suggest that geometry reconstruction and gradient allocation are complementary rather than universally ordered: GeoBalance is most useful when the weak-modality representation itself is the limiting factor.

CMU-MOSEI exposes a different limitation. ReconBoost obtains the strongest fixed-backbone result, whereas GeoBalance provides a smaller improvement. This pattern suggests that a single dataset-level weak-modality designation may be insufficient when several subordinate modalities or dynamically changing reliability contribute to imbalance.

The trainable-backbone block produces a more uniform advantage. End-to-end optimization allows the ETF and spectral objectives to modify the encoder rather than only a shallow modality projection, and asymmetric gradient projection prevents the fused objective from repeatedly reversing that modification. The strong result from scratch indicates that the gain is not simply inherited from pretrained initialization. At the same time, the improvement is smaller on Food-101 and VGGSound than on CREMA-D, which may reflect a ceiling effect from strong representations, the larger number of classes, or a weaker degree of MMC. Overall, the two backbone protocols support the central mechanism: the benefit of GeoBalance grows when the model has both a geometric bottleneck and sufficient trainable capacity to correct it.

\subsection{Component Analysis}
\label{sec:component_analysis}
\begin{table}[!htbp]
\caption{Component analysis on CREMA-D and Kinetics-Sounds. Each row removes one component while retaining the others.}
\label{tab:ablation}
\centering
\footnotesize
\renewcommand{\arraystretch}{1.08}
\begin{tabular*}{\columnwidth}{@{\extracolsep{\fill}}lcccc@{}}
\toprule
& \multicolumn{2}{c}{CREMA-D} & \multicolumn{2}{c}{Kinetics-Sounds} \\
\cmidrule(lr){2-3}\cmidrule(lr){4-5}
Configuration & Acc & $F_1$ & Acc & $F_1$ \\
\midrule
Full model & \best{87.33} & \best{86.63} & \best{78.64} & \best{77.76} \\
without monitor & 70.92 & 70.11 & 68.94 & 68.22 \\
without ETF & 75.61 & 75.45 & 70.97 & 70.77 \\
without spectral & 81.32 & 81.21 & 74.92 & 74.22 \\
without projection & 79.12 & 79.43 & 74.71 & 73.91 \\
\bottomrule
\end{tabular*}
\end{table}

The component analysis in \cref{tab:ablation} supports the monitor-reconstruct-project design. Removing the monitor causes the largest degradation despite retaining the reconstruction objectives. This shows that reconstruction is not merely an auxiliary regularizer: without state conditioning, it can disrupt healthy learning and push the weak modality toward the ETF scaffold prematurely. The effect is especially pronounced on CREMA-D, where the smaller scale and actor variability make early geometry estimates and class-specific cues more fragile.

The ETF and spectral objectives address different failure modes. Without ETF reconstruction, the model lacks a modality-independent class target once the fused objective can be minimized through the dominant modality. Without spectral regularization, the ETF classifier can still separate classes using a small subset of directions, leaving the remaining representation degenerate. The larger loss from removing ETF suggests that restoring class organization is the first requirement, while the spectral term determines whether that organization is supported by a sufficiently rich representation.

\begin{figure*}[!htbp]
\centering
\includegraphics[width=\linewidth]{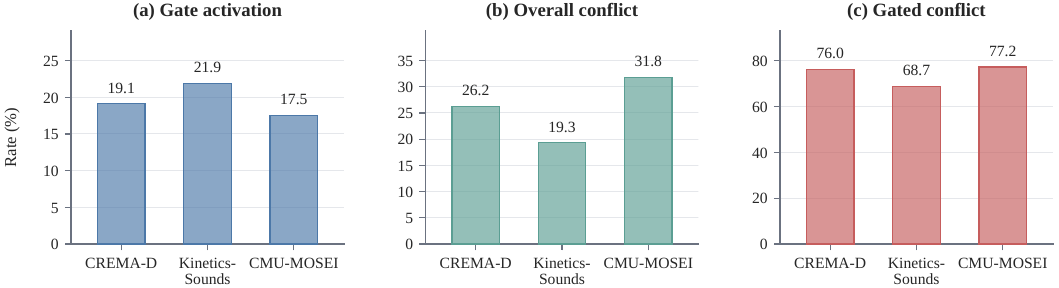}
\caption{Monitoring and gradient-conflict diagnostics. (a) Fraction of post-warm-up iterations for which the collapse gate is active. (b) Fraction of all post-warm-up iterations with $\langle\mathbf{g}_{J},\mathbf{g}_{G}\rangle<0$. (c) Fraction of gated iterations with the same conflict condition. All values are percentages.}
\label{fig:monitor_conflict}
\end{figure*}

Removing asymmetric gradient projection leaves the geometry loss intact but still causes a substantial decrease. This result is important because it distinguishes ``providing a reconstruction objective'' from ``allowing reconstruction to persist.'' When the joint and geometry gradients conflict, simply summing them can produce repeated cancellation on the weak branch. Projection does not add another target; it preserves the effect of the existing target with the minimum first-order modification of the joint gradient. The components are therefore complementary at the mechanism level: the monitor selects the state, ETF and spectral terms reconstruct the two properties of geometric health, and projection protects the resulting update.

\subsection{Monitoring Behavior}

To understand when GeoBalance intervenes during training, we analyze three complementary statistics. \emph{Gate-active iterations} measures the proportion of post-warm-up iterations for which the collapse gate is activated, indicating how frequently the weak modality is diagnosed as geometrically degraded. \emph{Conflict over all iterations} reports the proportion of all post-warm-up iterations in which the joint gradient conflicts with the geometry-reconstruction gradient. Finally, \emph{conflict within gated iterations} measures the same gradient conflict conditioned on gate activation, revealing whether destructive joint updates are concentrated precisely when reconstruction is required. Together, these statistics characterize both the selectivity of the monitor and the necessity of the asymmetric projection.

\Cref{fig:monitor_conflict}(a) shows that the collapse gate is activated selectively rather than throughout training. GeoBalance therefore leaves ordinary joint learning largely undisturbed when the weak-modality geometry remains usable, and introduces reconstruction only after degradation is detected. This behavior supports the state-conditioned design of GeoBalance and helps explain why removing the monitor causes the largest degradation in \cref{tab:ablation}.

The contrast between \cref{fig:monitor_conflict}(b) and \cref{fig:monitor_conflict}(c) shows that gradient conflict is strongly concentrated inside the gated subset. The gate and the gradient inner product therefore answer different questions. The gate asks whether the weak-modality representation requires reconstruction and the inner product asks whether the current joint update threatens that reconstruction. Their conjunction localizes the intervention both in representation state and in optimization direction.

CMU-MOSEI exhibits the highest overall conflict rate but does not show the strongest predictive gain. In a tri-modal setting, conflict can arise from several interacting branches, while the method reconstructs only one preselected weak modality. The dual-signal gate prevents such optimization conflict from being automatically interpreted as geometric collapse, but the result also motivates a multi-weak extension.

\subsection{Computational Cost}
\label{sec:efficiency}
\begin{table}[!htbp]
\caption{Measured runtime on CREMA-D. Total time is training plus validation per epoch. All auxiliary components are removed at inference.}
\label{tab:efficiency}
\centering
\footnotesize
\renewcommand{\arraystretch}{1.08}
\begin{tabular*}{\columnwidth}{@{\extracolsep{\fill}}lrrrr@{}}
\toprule
Method & Train & Val. & Total & Relative \\
& (s/epoch) & (s/epoch) & (s/epoch) & total \\
\midrule
Joint training & 22 & 5 & 27 & $1.00\times$ \\
GeoBalance & 48 & 5 & 53 & $1.96\times$ \\
\bottomrule
\end{tabular*}
\end{table}

\Cref{tab:efficiency} shows that the current implementation nearly doubles total training time on CREMA-D, despite activating reconstruction on only a minority of iterations. The overhead is not proportional to the gate rate because the detached monitor is evaluated at every iteration, and gated iterations require both the geometry objective and an additional gradient on $\theta_{\mathrm{sh}}$. Validation time is unchanged, confirming that the overhead is confined to training. At inference, the projection head, ETF prototypes, monitor, and collapse gate are removed, so the deployed parameterization and latency are identical to standard joint training. This distinction is favorable for deployment but does not eliminate the training cost. 

\subsection{Representation Analysis}
\label{sec:representation_analysis}
\begin{figure}[!htbp]
\centering
\includegraphics[width=0.9\linewidth]{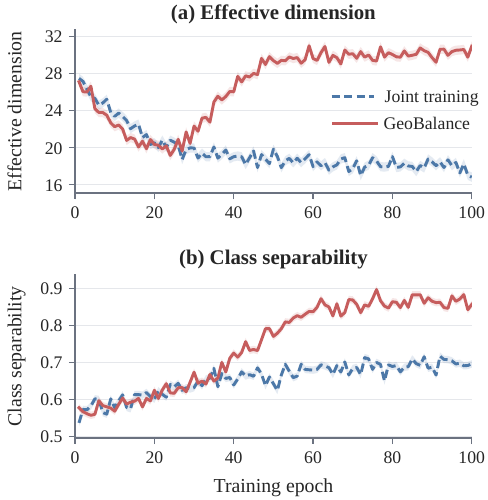}
\label{fig:sep_curve}
\caption{Evolution of weak-modality geometric health on CREMA-D during training, measured by (a) effective dimension and (b) class separability.}
\label{fig:geo_monitor}
\end{figure}

We examine how the geometry of the weak-modality representation evolves during training. For visualization, Fig. 4 reports the epoch-wise evolution of the two geometric quantities, while online intervention is determined from the EMA-smoothed minibatch estimates defined in Section IV-B. Specifically, we track the effective dimension and class separability for joint training and GeoBalance. As shown in \cref{fig:geo_monitor}, the two geometric measures exhibit markedly different trajectories under joint training and GeoBalance. Under joint training, effective dimension falls and remains low, while class separability improves only modestly. Neither observation alone establishes MMC, but their co-occurrence indicates that the weak-modality representation both loses active directions and fails to organize the remaining directions discriminatively. With GeoBalance, the early trajectory is left largely unchanged; after the gate activates, the effective dimension stabilizes and class separability rises. This state-dependent transition is consistent with the intended mechanism and cannot be explained by a manually fixed training stage.


\begin{figure}[!htbp]
    \centering
    \includegraphics[width=\linewidth]{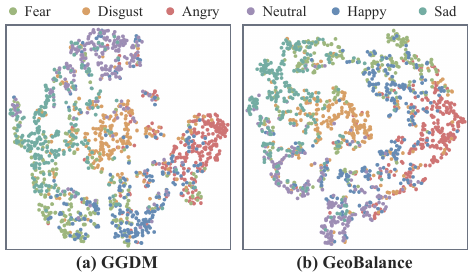}
    \caption{$t$-SNE visualization of CREMA-D weak-modality representations.}
    \label{fig:tsne}
\end{figure}

We further examine the final weak-modality representation on CREMA-D from a class-structure perspective using $t$-SNE. As shown in \cref{fig:tsne}, GeoBalance produces more compact and better separated clusters than GGDM, particularly for neighboring emotions. We attribute this improvement to its geometry-aware training, which preserves weak-modality non-degeneracy and separability while reducing destructive interference from the dominant shared update. Consequently, the weak modality retains a more linearly usable representation.

\begin{figure*}[!htbp]
\centering
\includegraphics[width=\linewidth]{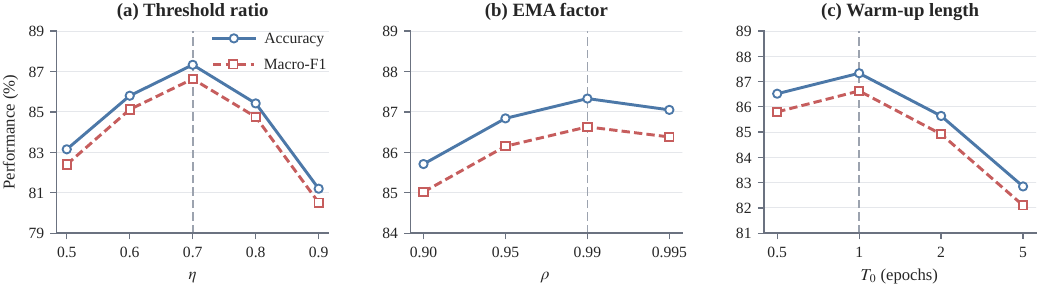}
\caption{Hyperparameter sensitivity of GeoBalance on CREMA-D under the fine-tuned protocol. We report classification accuracy and macro-F1 when varying (a) the threshold ratio $\eta$, (b) the EMA factor $\rho$, and (c) the warm-up length $T_0$, while keeping the remaining parameters fixed.}
\label{fig:hyperparameter_sensitivity}
\end{figure*}

\subsection{Hyperparameter Analysis}
\label{sec:hyperparameter_analysis}

GeoBalance exposes three monitoring controls: the threshold ratio $\eta$, the EMA factor $\rho$, and the warm-up length $T_0$. These parameters mainly determine \emph{when} the monitor activates reconstruction rather than the strength of the reconstruction itself. We vary one parameter at a time on CREMA-D under the fine-tuned protocol, while keeping the others fixed.

As shown in \Cref{fig:hyperparameter_sensitivity}, $\eta$ exhibits a clear rise-and-fall trend. A small $\eta$ makes the monitor overly conservative, so reconstruction may be activated only after substantial geometric deterioration has already occurred. Increasing $\eta$ improves responsiveness, but an excessively large value makes the monitor too sensitive to temporary fluctuations or benign representation compression, leading to premature intervention. This suggests that the collapse criterion should remain responsive without interfering with normal cross-modal co-adaptation.

The effect of $\rho$ is smoother. Moderate-to-large values improve performance by suppressing minibatch-level noise in the geometric statistics, while overly strong smoothing slightly reduces performance because the monitor reacts more slowly to genuine changes in representation state. This indicates that GeoBalance is relatively insensitive to $\rho$ as long as the EMA provides sufficient stability without excessive delay.

For $T_0$, a short warm-up is preferable. An insufficient warm-up may produce an unstable reference, whereas a long warm-up delays the earliest possible intervention and may allow modality dominance to contaminate the reference itself. Overall, the results show that GeoBalance benefits from a balanced monitoring strategy: neither overly conservative nor overly aggressive intervention is desirable, which is consistent with its state-conditioned design.
\section{Discussion}
\label{sec:discussion}

\subsection{What GeoBalance Preserves}
GeoBalance is not intended to equalize the predictive influence, optimization strength, or representation quality of different modalities. Recent evidence suggests that strict modality balance is not always desirable and that prioritizing a performance-dominant modality can be beneficial in some regimes~\cite{wei2026pdmp}. Our objective is therefore more specific. Rather than forcing a weak modality to match the dominant one, GeoBalance aims to preserve sufficient discriminative structure for the weak modality to remain usable by the joint predictor when it provides complementary evidence.

This perspective distinguishes \emph{representation health} from \emph{optimization balance}. Reassigning losses or gradients changes how strongly a modality is optimized, but does not necessarily determine what structure its representation retains. Two modalities may receive comparable optimization signals while exhibiting substantially different class-conditional geometry; conversely, unequal representation quality does not constitute a failure as long as the weaker modality still preserves useful discriminative information. GeoBalance therefore intervenes only when the weak-modality representation exhibits the coupled degradation associated with MMC. Its geometric objectives restore class separation and effective feature directions, while asymmetric gradient projection prevents conflicting joint updates from immediately undoing this reconstruction. Importantly, none of these operations requires the weak modality to imitate the dominant modality or to contribute equally at every stage of training. The intended outcome is instead to keep complementary information accessible to the fusion model without unnecessarily constraining naturally asymmetric multimodal learning.

\subsection{Limitations and Future Work}
The current formulation assumes supervised classification and identifies a single weak modality before joint training. This dataset-level designation may be insufficient when modality reliability changes dynamically across samples or training stages, or when multiple modalities become weak simultaneously. Extending GeoBalance to dynamically identify weak modalities and coordinate multiple geometry-reconstruction objectives is therefore a natural direction for future work. In addition, the online monitor and the extra geometry gradient introduce additional training cost, although they do not affect inference. Exploring adaptive geometric priors and more efficient monitoring and gradient computation could further improve the generality and efficiency of the framework.

\section{Conclusion}
This work revisits modality dominance from the perspective of representation geometry. We identify \emph{manifold modality collapse} (MMC), a representation-level failure in which the weak modality simultaneously loses within-class effective dimensionality and class separability. Based on this observation, we develop GeoBalance, which detects geometric degradation online, reconstructs the weak-modality representation only when necessary, and protects the reconstruction from conflicting joint updates through asymmetric gradient projection. Experiments across six multimodal benchmarks demonstrate improved predictive performance and healthier weak-modality geometry under diverse modality combinations and training protocols. These results highlight representation health as an important complement to optimization-based views of balanced multimodal learning.

\begingroup
\sloppy
\emergencystretch=1em
\bibliographystyle{IEEEtran}
\bibliography{tkde_geobalance,tkde_geobalance_mrl}
\endgroup

\end{document}